\let\cref\Cref
\newtheorem{thm}{Theorem}[section]
\newtheorem{lem}[thm]{Lemma}
\newtheorem{prop}[thm]{Proposition}
\theoremstyle{definition}
\newtheorem{defn}[thm]{Definition}
\theoremstyle{remark}
\renewcommand{\epsilon}{\varepsilon}
\renewcommand{\phi}{\varphi}
\newcommand{\K}{\mathbb{K}}
\newcommand{\B}[1]{\mathfrak{B}(#1)}
\newcommand{\BB}{\mathfrak{B}}
\newcommand\blfootnote[1]{%
  \begingroup
  \renewcommand\thefootnote{}\footnote{#1}%
  \addtocounter{footnote}{-1}%
  \endgroup
}
\begin{document}

\title{Relevant Attributes in Formal Contexts}

\author{Tom Hanika\inst{1,2} \and Maren Koyda\inst{1,2} \and Gerd Stumme\inst{1,2}}

\date{\today}

\institute{%
  Knowledge \& Data Engineering Group,
  University of Kassel, Germany\\[0.5ex]
  \and
  Interdisciplinary Research Center for Information System Design\\
  University of Kassel, Germany\\[0.5ex]
  \email{tom.hanika@cs.uni-kassel.de, koyda@cs.uni-kassel.de,
    stumme@cs.uni-kassel.de}
}
\maketitle

\blfootnote{Authors are given in alphabetical order.
  No priority in authorship is implied.}

\begin{abstract}
  Computing conceptual structures, like formal concept lattices, is in the age
  of massive data sets a challenging task. There are various approaches to deal
  with this, e.g., random sampling, parallelization, or attribute extraction. A
  so far not investigated method in the realm of formal concept analysis is
  attribute selection, as done in machine learning. Building up on this we
  introduce a method for attribute selection in formal contexts. To this end, we
  propose the notion of relevant attributes which enables us to define a
  relative relevance function, reflecting both the order structure of the
  concept lattice as well as distribution of objects on it. Finally, we overcome
  computational challenges for computing the relative relevance through an
  approximation approach based on information entropy.
\end{abstract}

\keywords{Formal~Concept~Analysis,
  Relvevant~Features,
  Attribute~Selection,
  Entropy,
  Label~Function}

\section{Introduction}
\label{sec:introduction}

The increasing number of features (attributes) in data sets poses a challenge
for many procedures in the realm of knowledge discovery. In particular, methods
employed in \emph{formal concept analysis} (FCA) become more infeasible for
large numbers of attributes. Of peculiar interest there is the construction,
visualization and interpretation of \emph{formal concept lattices}, an
algebraic structure usually represented through line or order diagrams.

The data structure used in FCA is a \emph{formal context}, roughly a data table
where every row represents an object associated to attributes described through
columns. Contemporary such data sets consist of thousands of rows and columns.
Since the computation of all formal concepts is at best possible with polynomial
delay~\cite{JOHNSON1988119}, thus sensitive to the output size, it is almost
unattainable to be computed even for moderately large sized data sets. The
problem for the computation of valid (attribute) implications is even more
serious since enumerating them is not possible with polynomial
delay~\cite{DISTEL2011450} (in lexicographic order) and only few algorithms are
known to compute them~\cite{DBLP:journals/amai/ObiedkovD07}. Furthermore, in
many applications storage space is limited, e.g., mobile computing or
decentralized embedded knowledge systems.

To overcome both the computational infeasibility as well as the storage
limitation one is required to select a sub-context resembling the original data
set most accurately. This can be done by selecting attributes, objects, or
both. In this work we will focus on the identification of relevant
attributes. This is, due to the duality of formal contexts, similar to the
problem of selecting relevant objects. There are several comparable works, e.g.,
~\cite{Kumar}, where the author investigated the applicability of random
projections. For supervised machine learning tasks there are even more
sophisticated methods utilizing filter approaches, which are based on the
distribution of labels~\cite{yu2003feature}. Works more related to FCA resort,
e.g., to concept sampling~\cite{sampling} and concept
selection~\cite{conceptsselection2010}. Both approaches, however, either need to
compute the whole (possibly large) concept lattice or sample from it.

In this work we overcome this limitation and present a feasible approach for
selecting relevant attributes from a formal context using information
entropy. To this end we introduce the notion for \emph{attribute relevance} to
the realm of FCA, based on a seminal work by Blum and
Langley~\cite{BLUM1997}. In there the authors address a comprehensible theory
for selecting most relevant features in supervised machine learning
settings. Building up on this we formalize a \emph{relative relevance} measure in
formal contexts in order to identify the most relevant attributes. However, this
measure is still prone to the limitation for computing the concept
lattice. Finally, we tackle this disadvantage by approximating the relative
relevance measure through an information entropy approach. Choosing attributes
based on this approximation leads to significantly more relevant selections than
random sampling does, which we demonstrate in an empirical experiment.

As for the structure of this paper, in Section 2 we  give a short overview
over the previous works applied to relevant attribute selection. Subsequently we
recall some basic notions from FCA followed by our definitions of relevance and
relative relevance of attribute selections and its approximations. In Section
4 we illustrate and evaluate our notions through experiments showing the
approximations are significantly superior to random sampling. We conclude our work
and give an outlook in Section 5.

\section{Related Work}
\label{sec:related-work}
In the field of supervised machine learning there are numerous approaches for
feature set selection. The authors from~\cite{JOHN1994121} introduced a
beneficial categorization for those in two categories: wrapper models and
filters. The wrapper models evaluate feature subsets using the underlying
learning algorithm. This allows to respond to redundant or correlated
features. However, these models demand many computations and are prone to
reproduce the procedural bias of the underlying learning algorithm. A
representative for this model type is the class of selective Bayesian
classifiers in Langley and Sage~\cite{Langley:1994}. In there the authors
extended the naive Bayesian classifier by considering subsets of given feature
sets only for predictions. The other category from~\cite{JOHN1994121} is filter
models. Those work independently from the underlying learning algorithm. Instead
these methods make use of general characteristics like the attribute
distribution with respect to the labels in order to weight an attribute's
importance. Hence, they are more efficient but are likely to select redundant or
futile features with respect to an underlying machine learning procedure. A
well-known method representing this class is RELIEF~\cite{Relief}, which denotes
the relevance of all features referring to the class label using a statistical
method. An entropy based approach of a filter model was introduced by Koller et
al.~\cite{Koller}. There the authors introduced selecting features based on the
Kullback-Leibler-distance. All these methods incorporate an underlying notion of
attribute relevance. This notion was captured and formalized in the seminal work
by Blum and Langley in~\cite{BLUM1997}, on which we will base the notion for
relevant attributes in formal contexts.

There are some approaches in FCA to face the attribute selection
problem. In~\cite{Kumar} a procedure based on random projection was
developed. Less related are methods employed after computing the formal concept
lattice, e.g., concept sampling~\cite{sampling} and concept
selection~\cite{conceptsselection2010}. Those could be compared to methods
from~\cite{Langley:1994}, as they first compute the concept lattice.  More
related works originate from granular computing with FCA. A basic idea there is
to find information granules based on entropy. To this end the
authors of~\cite{LOIA20181} introduced an (object) entropy function for formal
contexts, which we will utilize in this work as well. Their approach used the
principles of granulation as in~\cite{ZADEH1997111}, which is based on merging
attributes to reduce the data set. Since our focus is on selecting attributes,
we turn away from this notion in general.

\section{Relevant Attributes}
\label{sec:inter-attr}
Before we start with our definition for relevant attributes of a formal context,
we want to recall some basic notions from formal concept analysis. For a
thorough introduction we refer the reader to~\cite{fca-book}.  A formal context
is triple $\mathbb{K}\coloneqq(G,M,I)$, where $G$ and $M$ are finite sets called
\emph{object set} and \emph{attribute set}, respectively. Those are connected
through a binary relation $I\subseteq G\times M$, called \emph{incidence}. If
$(g,m)\in I$ for an object $g\in G$ and an attribute $m\in M$, we write $gIm$
and say ``object $g$ has attribute $m$''.  On the power set of the objects
(power set of the attributes) we introduce two operators
$\cdot'\colon\mathcal{P}(G)\to\mathcal{P}(M)$, where $A\mapsto A'\coloneqq
\{m\in M\mid \forall g\in A\colon (g,m)\in I\}$ and
$\cdot'\colon\mathcal{P}(M)\to\mathcal{P}(G)$, where $B\mapsto B'\coloneqq\{g\in
G\mid \forall m\in B\colon (g,m)\in I\}$.  A pair $(A,B)$ with $A\subseteq G$
and $B\subseteq M$ is called \emph{formal concept} of the context $(G,M,I)$ iff
$A'=B$ and $B'=A$. For a formal concept $c=(A,B)$ the set $A$ is called the
\emph{extent} ($\extent{c}$) and $B$ the \emph{intent} ($\intent{c}$). For two
concepts $(A_1,B_1)$ and $(A_2,B_2)$ there is a natural partial order given
through $(A_1,B_1)\leq(A_2,B_2)$ iff $A_1\subseteq A_2$.  The set of all formal
concepts of some formal context $\mathbb{K}$, denoted by
$\mathfrak{B}(\mathbb{K})$, together with the just introduced partial order
constitutes the \emph{formal concept lattice}
$\underline{\mathfrak{B}}(\mathbb{K})\coloneqq(\mathfrak{B}(\mathbb{K}),\leq)$.

A severe computational problem in FCA is to compute the set of all formal
concepts, which resembles the CLIQUE problem~\cite{JOHNSON1988119}. Furthermore,
the number of formal concepts in a proper sized real-world data set tends to be
very large, e.g., 238710 in the (small) mushroom data set,
see~\cref{sec:data-set-description}. Hence, concept lattices for contemporary
sized data sets are hard to grasp and hard to cope with through consecutive
measures and metrics. Thus, a need for selecting sub-contexts from data sets or
sub-lattices is self-evident. This selection can be conducted in the formal
context as well as in the concept lattice. However, the computational feasible
choice is to do this in the formal context. Considering a induced sub-context
can be done in general in three different ways: One may consider only a subset
$\hat G\subseteq G$, a subset $\hat M\subseteq M$, or a combination of
those. Our goal for the rest of this work is to identify relevant attributes in
a formal context. The notion for (attribute) relevance shall cover two aspects:
the lattice structure and the distribution of objects on it. The task at hand is
to choose the most relevant attributes which do both reflect a large part of the
lattice structure as well as the distribution of the objects on the
concepts. For this we will introduce in the next section a notion for
\emph{relevant attributes} in a formal context. Due to the duality in FCA this
can easily be translated to object relevance.

\subsection{Choosing Attributes}
\label{sec:choosing-attributes}
There is plenitude of conceptions for describing the relevance of an attribute
in a data set. Apparently, the relevance should depend on the particular machine
learning or knowledge discovery procedure. One very influential work in this
direction was done by Blum and Langley in~\cite{BLUM1997}, where the authors
defined the \emph{(weak/strong) relevance} of an attribute in the realm of
labeled data. In particular, for some data set of examples $D$, described using
features from some feature set $F$, where every $d\in D$ has the label
(distribution) $\ell(d)$, the authors stated: A feature $x\in F$ is
\emph{relevant} to a target concept-label if there exists a pair of examples
$a,b\in D$ such that $a$ and $b$ only differ in their assignment of $x$ and
$\ell(a)\neq \ell(b)$. They further expanded their notion calling some attribute
$x$ is \emph{weakly relevant} iff it is possible to remove a subset of the
features (from $a$ and $b$) such that $x$ becomes relevant.

Since in the realm of formal concept analysis data is commonly unlabeled we may
not directly adapt the above notion to formal contexts. However, we may motivate
the following approach with it. We cope with the lack of a label function in the
following way. First, we identify the data set $D$ with a formal context
$(G,M,I)$, where the elements of $G$ are the examples and $M$ are the features
describing the examples. Secondly, a formal concept lattice exhibits essentially
two almost independent properties, the order structure and the distribution of
objects (attributes) on it, cf.~\cref{ex:distro}. Thus, a \emph{conceptual label
  function} then shall reflect both the order structure as well as the
distribution of objects in this structure. To achieve this we propose the
following.
\begin{defn}[Extent Label Function]
  \label{def:conlab}
  Let $\mathbb{K}\coloneqq (G,M,I)$ be a formal context and its concept lattice
  $\underline{\mathfrak{B}}(\mathbb{K})$. The map
  $\ell_{\mathbb{K}}:G\to\mathbb{N}, g\mapsto |\{c\in\mathfrak{B}(\mathbb{K})\mid
  g\in\extent{c}\}|$ is called \emph{extent label function}.
\end{defn}

One may define an \emph{intent label function} analogously.  Utilizing the just
introduced label function we may now define the notion of relevant attributes
in formal contexts.

\begin{defn}[Relevance]
  Let $\mathbb{K}\coloneqq(G,M,I)$ be a formal context. We say an attribute $m\in M$ is
  \textit{relevant to $g\in G$} if and only if
  $\ell_{\mathbb{K}_{\{m\}}}(g)<\ell_{\mathbb{K}}(g)$,
  where $\mathbb{K}_{\{m\}}\coloneqq (G,M\setminus \{m\},I\cap G\times
  (M\setminus\{m\}))$. Furthermore, $m$ is \emph{relevant to a subset}
  $A\subseteq G$ iff there is a $g\in A$ such that $m$ is relevant to $g$. And,
  we say $m$ is \emph{relevant to the context $\mathbb{K}$} iff $m$ is relevant
  to $G$.
\end{defn}

\begin{example} 
  \label{ex:distro}
  Figure \ref{runningexp} (right) shows a formal context and its concept
  lattice. The objects from there are abbreviated by their first letter in the
  following. The extent label function of the objects can easily be read from
  the lattice and is given by $\ell_{\mathbb{K}}(B)=2,\ \ell_{\mathbb{K}}(F)=4,\
  \ell_{\mathbb{K}}(D)=2,\ \ell_{\mathbb{K}}(S)=3$.  Additionally, one can
  deduct the relevant attributes. E.g., for attribute $b$ the equality
  $\ell_{\K_{\{b\}}}(D)=\ell_{\mathbb{K}}(D)$ holds. In contrast
  $\ell_{\K_{\{b\}}}(S)<\ell_{\mathbb{K}}(S)$,
  cf.~\cref{exp:sublattices}. Hence, attribute $b$ is not relevant to ``Dog''
  but relevant to ``Spike-weed''. Thus, $b$ is relevant to $\K$.
\end{example}

\begin{figure}[t]
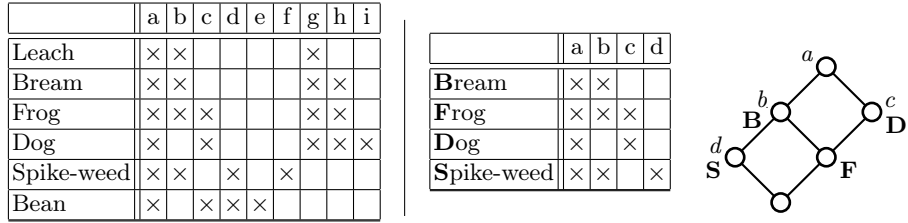

  \label{runningexp}
  \begin{minipage}{.44\textwidth}
    \centering
    \begin{cxt}%
      \cxtName{}%
      \att{a}%
      \att{b}%
      \att{c}%
      \att{d}%
      \att{e}%
      \att{f}%
      \att{g}%
      \att{h}%
      \att{i}%
      \obj{xx....x..}{Leach}%
      \obj{xx....xx.}{Bream}%
      \obj{xxx...xx.}{Frog}%
      \obj{x.x...xxx}{Dog}%
      \obj{xx.x.x...}{Spike-weed}%
      \obj{x.xxx....}{Bean}%
    \end{cxt}
  \end{minipage}
  \rule[-4em]{0.1ex}{8em}\ 
  \begin{minipage}{.30\textwidth}
    \centering
    \begin{cxt}%
      \cxtName{}%
      \att{a}%
      \att{b}%
      \att{c}%
      \att{d}%
      \obj{xx..}{\textbf{B}ream}%
      \obj{xxx.}{\textbf{F}rog}%
      \obj{x.x.}{\textbf{D}og}%
      \obj{xx.x}{\textbf{S}pike-weed}%
    \end{cxt} 		
  \end{minipage}
  \begin{minipage}{.24\textwidth}
    \begin{flushright}
      {\unitlength 0.6mm
        \begin{diagram}{40}{50}
          \Node{1}{0}{15}
          \Node{2}{10}{5}
          \Node{3}{20}{15}
          \Node{4}{10}{25}
          \Node{5}{30}{25}
          \Node{6}{20}{35}
          \Edge{1}{2}
          \Edge{1}{4}
          \Edge{2}{3}
          \Edge{3}{4}
          \Edge{3}{5}
          \Edge{6}{4}
          \Edge{6}{5}
          \leftAttbox{6}{3}{1}{a} 
          \rightAttbox{5}{3}{1}{c}
          \leftAttbox{1}{3}{1}{d}
          \leftAttbox{4}{3}{1}{b} 
          \rightObjbox{5}{3}{1}{\textbf{D}}
          \leftObjbox{1}{3}{1}{\textbf{S}}
          \rightObjbox{3}{3}{1}{\textbf{F}}
          \leftObjbox{4}{4}{0}{\textbf{B}}
        \end{diagram}}
    \end{flushright}
  \end{minipage}
  \caption{Sub-contexts of "Living Beings and Water"~\cite{fca-book}. The attributes are: a: needs water to live, b: lives in water, c: lives on land, d: needs chlorophyll to produce food, e: two seed leaves, f: one seed leaf, g: can move around, h: has limbs, i: suckles its offspring}
\end{figure}

There are two structural approaches in FCA to identify admissible attributes,
namely \emph{attribute clarifying} and \emph{reducibility}. Those are based
purely on the lattice structure. A formal context $\mathbb{K}\coloneqq(G,M,I)$
is called \emph{attribute clarified} iff for all attributes $m,n\in M$ with
$m'=n'$ follows that $m=n$. If there is furthermore no $m\in M$ and $X\subseteq
M$ with $m'=X'$ the context is called \emph{attribute reduced}. Analogously, the
terms \emph{object clarified} and \emph{object reduced} can be determined. An
attribute and object clarified (reduced) context is simply called
\emph{clarified} (\emph{reduced}). The concept lattice of the clarified/reduced
context is isomorphic to the concept lattice of the original context. If one of
these properties does not hold for an attribute (or an object) the context can
be can clarified/reduced by eliminating all such attributes (objects).
Obviously, the notion for relevant attributes is related to reducibility.

\begin{lem}[Irreducible]
  \label{lem:irred}
  For $m\in M$ in $\mathbb{K}=(G,M,I)$ holds
  \[m \ \text{is relevant to}\ \mathbb{K} \iff m\ \text{is irreducible}.\]  
\end{lem}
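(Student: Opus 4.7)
The plan is to observe that the extents of $\mathbb{K}_{\{m\}}$ are exactly those extents of $\mathbb{K}$ which can be written as $B'$ with $B\subseteq M\setminus\{m\}$; this is because for such $B$ the derivation in $\mathbb{K}$ and in $\mathbb{K}_{\{m\}}$ coincides. Consequently, the extents of $\mathbb{K}_{\{m\}}$ form a subset of the extents of $\mathbb{K}$, so $\ell_{\mathbb{K}_{\{m\}}}(g)\le \ell_{\mathbb{K}}(g)$ for every $g\in G$, and $m$ is relevant to $g$ precisely when some extent of $\mathbb{K}$ containing $g$ fails to be an extent of $\mathbb{K}_{\{m\}}$.

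For the direction ``$m$ relevant $\Rightarrow$ $m$ irreducible'' I would argue by contraposition. Assume $m$ is reducible and pick $C\subseteq M\setminus\{m\}$ with $m'=C'$. Any extent $B'$ of $\mathbb{K}$ with $m\in B$ then satisfies $B'=(B\setminus\{m\})'\cap m'=(B\setminus\{m\})'\cap C'=((B\setminus\{m\})\cup C)'$, hence it is also an extent of $\mathbb{K}_{\{m\}}$. Extents with $m\notin B$ are trivially preserved. So the two contexts share the same extents, $\ell_{\mathbb{K}_{\{m\}}}=\ell_{\mathbb{K}}$ pointwise, and no relevance of $m$ to any object can occur.

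For the converse ``$m$ irreducible $\Rightarrow$ $m$ relevant'' I would take the attribute extent $m'$ itself as the witnessing lost extent. Irreducibility says that $m'$ is \emph{not} of the form $C'$ for any $C\subseteq M\setminus\{m\}$, so $m'=\{m\}'$ is an extent of $\mathbb{K}$ but not of $\mathbb{K}_{\{m\}}$. Picking any object $g\in m'$ then yields a strict drop in the number of extents containing $g$, witnessing that $m$ is relevant to $g$ and hence to $\mathbb{K}$.

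The step I expect to require the most care is this very choice of witness $g\in m'$: it presupposes $m'\neq\emptyset$. A degenerate attribute with $m'=\emptyset$ appears only in the bottom extent and cannot change any $\ell$-value. However, under the convention of ``attribute reduced'' given in the paper (namely, forbidding $m'=X'$ for $X\subseteq M$ with $X\neq\{m\}$) such an attribute is already reducible as soon as $|M|\ge 2$, because then $X=M$ gives $X'=M'\subseteq m'=\emptyset$, so $X'=m'$. Thus the corner case is harmlessly subsumed, and the equivalence goes through.
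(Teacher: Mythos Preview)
Your argument is correct and rests on the same underlying fact as the paper's --- the extents of $\mathbb{K}_{\{m\}}$ form a subset of those of $\mathbb{K}$, with equality exactly when $m$ is reducible --- but your execution is more self-contained and considerably more careful. The paper handles $(\Rightarrow)$ by invoking a concept bijection from~\cite{fca-book} and $(\Leftarrow)$ by citing the join-preserving order embedding (Proposition~30 there); you instead prove both directions directly from the derivation operators, which avoids the external dependencies. More substantively, for $(\Leftarrow)$ the paper only records that every extent of $\mathbb{K}_{\{m\}}$ is an extent of $\mathbb{K}$ and then asserts $\ell_{\mathbb{K}_{\{m\}}}(g)<\ell_{\mathbb{K}}(g)$ for \emph{all} $g\in G$ --- a step that neither uses the irreducibility hypothesis nor is true in general. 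Your version fixes this by exhibiting the concrete lost extent $m'$ and choosing a witness $g\in m'$. You also flag and resolve the degenerate case $m'=\emptyset$, which the paper ignores; your reading of the paper's (loosely phrased) reducibility definition as ``$X\neq\{m\}$'' is what makes the lemma go through, and it is worth noting that under the stricter standard convention $m\notin X$ an irreducible empty column would actually be a counterexample, so your caution there is warranted rather than excessive.
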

\begin{proof}
  We first show ($\Rightarrow$).  We have to show that the following  inequality holds:
  $|\{c\in\B{\K}\mid g\in\extent{c}\}|\leq|\{c\in\B{\K_{\{m\}}}\mid g\in
  \extent{c}\}|$.  Since $g\in\extent{c}$ and for any $c\in\B{\K}$ exists a
  unique concept $\hat c\in\B{\K_{\{m\}}}$ with $\intent{\hat
    c}\cup\{m\}=\intent{c}$, cf.~\cite[pg 24]{fca-book}, we have that
  $g\in(\intent{\hat c}\cup\{m\})'\subseteq \intent{\hat c}'$.  For
  ($\Leftarrow$) we employ~\cite[Prop. 30]{fca-book}, i.e., there is a
  join preserving order embedding
  $(G,M\setminus{m},I\cap(G\times(M\setminus\{m\})))\to (G,M,I)$ with
  $(A,B)\mapsto (A,A')$. Hence, every extent in $\B{\K_{\{m\}}}$ is also an
  extent in $\B{\K}$ which implies for all $g\in G$ that
  $\ell_{\K_{\{m\}}}(g)<\ell_{\K}(g)$.
\end{proof}

The last lemma implies that no clarifiable attributes would be considered as
relevant, even if the removal of all attributes that have the same closure would
have a huge impact on the structure of the concept lattice. Therefore a
meaningful identification of relevant attributes restrains to the identification
of meaningful equivalence classes $[x]_{\K}\coloneqq\{y\in M\mid x'=y'\}$ for
all $y\in M$. Accordingly we consider in the following only clarified
contexts. Transferring the relevance of an attribute $m \in M $ to its
equivalence class is an easy task which can be executed if necessary.

So far we are only able to decide for the relevance of an attribute but not
discriminate attributes upon their relevancy to the concept lattice. To overcome
this limitation we introduce in the following a measure which is able to compare
the relevancy of two given attributes in a clarified formal context. We consider
the change in the object label distribution $\{(g,\ell_{\K}(g))\mid g\in G\}$
going from $\K$ to $\K_{\{m\}}$ as characteristic to the relevance of a relevant
attribute $m$. To examine this characteristic in more detail and to make it
graspable via an numeric value we propose the following inequality: $\sum_{g\in
  G}\ell_{\mathbb{K}_{\{m\}}}(g)<\sum_{g\in G}\ell_{\mathbb{K}}(g)$.  This
approach offers not only the possibility to verify the existence of a change in
the object label distribution but also to measure the extent of this change. We
may quantify this  via $\sum_{g\in
  G}\ell_{\mathbb{K}_{\{m\}}}(g)/\sum_{g\in G}\ell_{\mathbb{K}}(g)\eqqcolon
t(m)$ whence $t(m) < 1$ for all  attributes $m\in M$.

\begin{defn}[Relative Relevance]
	\label{def:relr}
	Let $\mathbb{K}=(G,M,I)$ be a clarified formal context. The attribute
        $m\in M$ is \textit{relative relevant} to $\mathbb{K}$ with
	\[r(m)\coloneqq1- \frac{\sum_{g\in G} |\{c\in\mathfrak{B}(\mathbb{K}_{\{m\}})\mid
		g\in\extent{c}\}|}{\sum_{g\in G}  |\{c\in\mathfrak{B}(\mathbb{K})\mid
		g\in\extent{c}\}|} = 1-t(m).\]
\end{defn} 

The values of $r(m)$ for an attribute are in $[0,1)$. We say $m\in M$ is
\emph{more relevant} to $\K$ than $n\in M$ iff $r(n)< r(m)$. Double counting
leads to the following proposition.

\begin{prop}
  \label{lem:sum}
  Let $\mathbb{K}=(G,M,I)$ be a formal context. For  all $m\in M$ holds 
  \[r(m)=1- \frac{\sum_{c\in\B{\K}_{\{m\}}} |ext(c)|}{\sum_{c\in\mathfrak{B}} |ext(c)|}\]
  with $\B{\K}_{\{m\}}= \{c\in\mathfrak{B}| (int(c)\setminus\{m\})'=ext(c)\}$. 
\end{prop}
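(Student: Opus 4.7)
The plan is to reduce the identity to the definition of $r(m)$ via two applications of double counting together with a bijective identification between concepts of $\K_{\{m\}}$ and a distinguished subcollection of concepts of $\K$.

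First I would deal with the denominator. Summing the incidence $\{(g,c)\mid c\in\B{\K},\ g\in\extent{c}\}$ in two different orders yields
\[
\sum_{g\in G}|\{c\in\B{\K}\mid g\in\extent{c}\}| \;=\; \sum_{c\in\B{\K}}|\extent{c}|,
\]
so the denominators on both sides of the claimed formula already agree. Exactly the same Fubini-style swap applied to $\K_{\{m\}}$ gives
\[
\sum_{g\in G}|\{\hat c\in\B{\K_{\{m\}}}\mid g\in\extent{\hat c}\}| \;=\; \sum_{\hat c\in\B{\K_{\{m\}}}}|\extent{\hat c}|,
\]
which reduces the problem to showing that this last sum coincides with $\sum_{c\in\B{\K}_{\{m\}}}|\extent{c}|$, where $\B{\K}_{\{m\}}\subseteq\B{\K}$ is the set defined in the statement.

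For this I would exhibit an extent-preserving bijection $\Phi\colon\B{\K}_{\{m\}}\to\B{\K_{\{m\}}}$ given by $(A,B)\mapsto(A,B\setminus\{m\})$. If $(A,B)\in\B{\K}_{\{m\}}$, then the defining condition $(B\setminus\{m\})'=A$, combined with the fact that derivations in $\K$ and in $\K_{\{m\}}$ agree on subsets of $M\setminus\{m\}$, shows that $(A,B\setminus\{m\})$ is indeed a concept of $\K_{\{m\}}$. Conversely, for $\hat c=(A,\hat B)\in\B{\K_{\{m\}}}$ the standard subcontext correspondence already invoked in the proof of \cref{lem:irred} produces the concept $(A,A'_{\K})\in\B{\K}$, whose intent differs from $\hat B$ only by possibly the element $m$; in particular $(A'_{\K}\setminus\{m\})'=A$, so this concept lies in $\B{\K}_{\{m\}}$ and is inverse to $\Phi$. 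Since $\Phi$ leaves the first coordinate untouched, it preserves extent cardinalities, and the two sums match.

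Combining these three identities with the definition of $r(m)$ from \cref{def:relr} gives the claim. The only substantive step is verifying the bijection $\Phi$; the essential observation is that the extents of $\B{\K_{\{m\}}}$ are precisely those extents $A\in\B{\K}$ that are already $\K$-closed without relying on the attribute $m$, which is exactly the condition $(\intent{c}\setminus\{m\})'=\extent{c}$ used to single out $\B{\K}_{\{m\}}$.
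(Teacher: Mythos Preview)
Your argument is correct and matches the paper's approach: the paper states only that the proposition follows by double counting and gives no further details, so your write-up is in fact a fleshed-out version of exactly what the authors had in mind. The bijection $\Phi$ you supply is the standard identification of extents of $\K_{\{m\}}$ with those $\K$-extents not requiring $m$, and your verification of both directions is clean.
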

\begin{figure}[t]
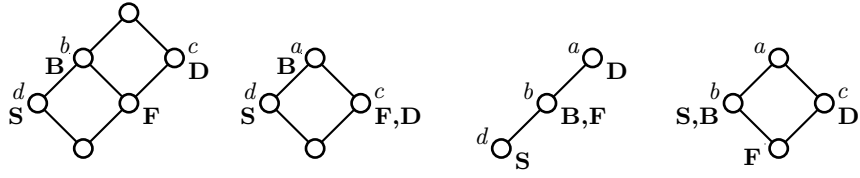

  \label{exp:sublattices}
  \begin{flushright}
    {\unitlength 0.6mm				
      \begin{diagram}{40}{50}
        \Node{1}{0}{15}
        \Node{2}{10}{5}
        \Node{3}{20}{15}
        \Node{4}{10}{25}
        \Node{5}{30}{25}
        \Node{6}{20}{35}
        \Edge{1}{2}
        \Edge{1}{4}
        \Edge{2}{3}
        \Edge{3}{4}
        \Edge{3}{5}
        \Edge{6}{4}
        \Edge{6}{5}
        \rightAttbox{5}{3}{1}{c}
        \leftAttbox{1}{3}{1}{d}
        \leftAttbox{4}{3}{1}{b} 
        \rightObjbox{5}{3}{1}{\textbf{D}}
        \leftObjbox{1}{3}{1}{\textbf{S}}
        \rightObjbox{3}{3}{1}{\textbf{F}}
        \leftObjbox{4}{4}{0}{\textbf{B}}
      \end{diagram}}\quad\quad
    {\unitlength 0.6mm
      \begin{diagram}{40}{50}
        \Node{1}{0}{15}
        \Node{2}{10}{5}
        \Node{3}{20}{15}
        \Node{4}{10}{25}
        \Edge{1}{2}
        \Edge{1}{4}
        \Edge{2}{3}
        \Edge{3}{4}
        \rightAttbox{3}{3}{1}{c}
        \leftAttbox{1}{3}{1}{d}
        \leftAttbox{4}{3}{1}{a} 
        \leftObjbox{1}{3}{1}{\textbf{S}}
        \rightObjbox{3}{3}{1}{\textbf{F,D}}
        \leftObjbox{4}{4}{0}{\textbf{B}}
      \end{diagram}}\quad\quad
    {\unitlength 0.6mm
      \begin{diagram}{40}{50}
        \Node{1}{0}{5}
        \Node{2}{10}{15}
        \Node{3}{20}{25}
        \Edge{1}{2}
        \Edge{3}{2}
        \leftAttbox{3}{3}{1}{a}
        \leftAttbox{1}{3}{1}{d}
        \leftAttbox{2}{3}{1}{b} 
        \rightObjbox{3}{3}{1}{\textbf{D}}
        \rightObjbox{1}{3}{1}{\textbf{S}}
        \rightObjbox{2}{3}{1}{\textbf{B,F}}
      \end{diagram}}\quad\quad
    {\unitlength 0.6mm
      \begin{diagram}{40}{50}
        \Node{1}{0}{15}
        \Node{2}{10}{5}
        \Node{3}{20}{15}
        \Node{4}{10}{25}
        \Edge{1}{2}
        \Edge{1}{4}
        \Edge{2}{3}
        \Edge{3}{4}
        \rightAttbox{3}{3}{1}{c}
        \leftAttbox{1}{3}{1}{b}
        \leftAttbox{4}{3}{1}{a} 
        \leftObjbox{1}{3}{1}{\textbf{S,B}}
        \rightObjbox{3}{3}{1}{\textbf{D}}
        \leftObjbox{2}{4}{0}{\textbf{F}}
      \end{diagram}}
    \end{flushright}
    \caption{Sub-lattices created through the removal of an attribute
      from~\cref{runningexp} (right). From left to right: removing a,b,c, or d.}
\end{figure}

This statement reveals an interesting property of the just defined relative
relevance. In fact, an attribute $m\in M$ is more relevant to an formal context
$\K$ if the join preserving sub-lattice, which one does obtain by removing $m$
from $\K$, does exhibit a smaller sum of all extent sizes. This will enable us
to find proper approximations to the relative relevance
in~\cref{sec:appr-relev}.

\begin{example}
  \label{exp:relrelevance}
  Excluding one attribute from the running example in~\cref{runningexp} (right)
  results in the sub-lattices in~\cref{exp:sublattices}.  The relative relevance
  of the attributes to the original context is given by $r(a)=0$, $r(b)=4/11$,
  $r(c)=3/11$, and $r(d)=1/11$.
\end{example}

By means of $r(\cdot)$ it is also possible to measure the relative relevance of
a set $N\subseteq M$. We simply lift~\ref{lem:sum} by $r(N)=1-
\sum_{c\in\B{\K}_{N}} |\extent{c}|\,/\,\sum_{c\in\B{\K}} \mid \extent{c}|$ with
$\B{\K}_{N}= \{c\in\B{K}\mid (\intent{c}\setminus\{N\})'=\extent{c}\}$.

\begin{lem}
	\label{lem:d-ungl}
	Let $\mathbb{K}=(G,M,I)$ be a formal context and $S,T\subseteq M$
        attribute sets. Then
        \begin{enumerate}[label=\roman*)]
        \item\label{lemdungl1} $S\subseteq T\implies r(S)\leq r(T)$, and
        \item\label{lemdungl2} $r(S\cup T)\leq r(T)+r(S)$. 
        \end{enumerate}
\end{lem}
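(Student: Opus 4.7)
For part~(i), I claim $\B{\K}_T\subseteq\B{\K}_S$ whenever $S\subseteq T$: for any $c\in\B{\K}_T$ we have $(\intent{c}\setminus T)'=\extent{c}$, and the chain $\intent{c}\setminus T\subseteq\intent{c}\setminus S\subseteq\intent{c}$ combined with the inclusion-reversing behaviour of $(\cdot)'$ sandwiches $(\intent{c}\setminus S)'$ between $(\intent{c})'=\extent{c}$ and $(\intent{c}\setminus T)'=\extent{c}$, forcing equality. Summing extent sizes and dividing by the common normaliser $\sum_{c\in\B{\K}}|\extent{c}|$ then yields $r(S)\leq r(T)$.

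For part~(ii), I would unfold $r$ via \cref{lem:sum} and restate the inequality as
\[
  \sum_{c\in\B{\K}}|\extent{c}| + \sum_{c\in\B{\K}_{S\cup T}}|\extent{c}| \;\geq\; \sum_{c\in\B{\K}_{S}}|\extent{c}| + \sum_{c\in\B{\K}_{T}}|\extent{c}|.
\]
By double counting over objects this is equivalent to $\sum_{g\in G}[\ell_{\K}(g)+\ell_{\K_{S\cup T}}(g)]\geq\sum_{g\in G}[\ell_{\K_S}(g)+\ell_{\K_T}(g)]$, and the natural attempt is to show the stronger pointwise bound. Writing $A_X(g)\coloneqq\{c\in\B{\K}_X\mid g\in\extent{c}\}$, inclusion--exclusion gives $|A_S(g)|+|A_T(g)|=|A_S(g)\cup A_T(g)|+|A_S(g)\cap A_T(g)|$, and $A_S(g),A_T(g)\subseteq A_\emptyset(g)$ bounds the union term by $\ell_{\K}(g)$.

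The remaining step, bounding $|A_S(g)\cap A_T(g)|$ above by $\ell_{\K_{S\cup T}}(g)=|A_{S\cup T}(g)|$, is the crux. Part~(i) applied to $S,T\subseteq S\cup T$ only yields the \emph{reverse} inclusion $A_{S\cup T}(g)\subseteq A_S(g)\cap A_T(g)$, so any concept lying in $\B{\K}_S\cap\B{\K}_T$ but not in $\B{\K}_{S\cup T}$ and containing $g$ breaks the pointwise attempt. I would therefore try to recover the inequality globally, via a size-respecting injection from $\B{\K}_S\cap\B{\K}_T\setminus\B{\K}_{S\cup T}$ into $\B{\K}\setminus(\B{\K}_S\cup\B{\K}_T)$; a natural candidate sends such a $c$ to the concept in $\B{\K}$ generated by $\intent{c}\setminus(S\cup T)$, whose extent strictly contains $\extent{c}$. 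Establishing injectivity together with the required extent-size comparison for this compensation map is where I expect the main difficulty to lie.
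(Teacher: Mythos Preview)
Your argument for part~(i) is correct and coincides with the paper's: the sandwiching $\extent{c}=(\intent{c}\setminus T)'\supseteq(\intent{c}\setminus S)'\supseteq\extent{c}$ gives $\B{\K}_T\subseteq\B{\K}_S$, and the inequality for $r$ follows.

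For part~(ii), your caution is exactly right, and in fact no argument can close the gap because the statement is \emph{false} as written. Take $G=\{1,2,3,4\}$, $M=\{a,b,c\}$ with $\{a\}'=\{1,4\}$, $\{b\}'=\{1,2\}$, $\{c\}'=\{1,3\}$ (so the context is clarified). The five concepts have extents $G,\{1,4\},\{1,2\},\{1,3\},\{1\}$, whence $\sum_{c\in\B{\K}}|\extent{c}|=11$. For $S=\{b\}$ and $T=\{c\}$ one computes $\sum_{\B{\K}_S}|\extent{c}|=\sum_{\B{\K}_T}|\extent{c}|=9$ and $\sum_{\B{\K}_{S\cup T}}|\extent{c}|=6$, so $r(S)=r(T)=\tfrac{2}{11}$ but $r(S\cup T)=\tfrac{5}{11}>\tfrac{4}{11}=r(S)+r(T)$. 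The bottom concept $(\{1\},\{a,b,c\})$ is precisely the obstruction you anticipated: it lies in $\B{\K}_S\cap\B{\K}_T$ yet not in $\B{\K}_{S\cup T}$; worse, here $\B{\K}\setminus(\B{\K}_S\cup\B{\K}_T)=\emptyset$, so your proposed compensation map has no target at all.

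The paper's own proof \emph{appears} to work only because it asserts the identity $(\star)\colon\B{\K}_S\cap\B{\K}_T=\B{\K}_{S\cup T}$, derived via the step $\bigl((\intent{c}\setminus S)\cap(\intent{c}\setminus T)\bigr)'=(\intent{c}\setminus S)'\cup(\intent{c}\setminus T)'$. That equality is not a valid identity for the derivation operators---only the containment $\supseteq$ holds in general---and the example above witnesses its failure. So your instinct that only the inclusion $\B{\K}_{S\cup T}\subseteq\B{\K}_S\cap\B{\K}_T$ is available was the correct diagnosis; the reverse inclusion, and with it part~(ii), does not hold.
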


\begin{proof}
  We prove~\ref{lemdungl1} by showing $\sum_{c\in\mathfrak{B}_{S}}
  |\extent{c}|>\sum_{c\in\mathfrak{B}_{T}} |\extent{c}|$.  Since
  $\forall c\in\B{\K}$ we have $(\intent{c}\setminus T)'\supseteq (\intent{c}\setminus
  S)'\supseteq\extent{c}$ we obtain $\B{\K}_{S}\supseteq \B{\K}_{T}$, as
  required.

  For ~\ref{lemdungl2} we will use the identity ($\star$):
  $\B{\K}_{S}\cap\B{\K}_{T}=\B{\K}_{S\cup T}$, which follows from
   $(\intent{c}\setminus S)'=\extent{c}\wedge (\intent{c}\setminus
  T)'=\extent{c}\iff (\intent{c}\setminus (S\cup T))'=\extent{c}$ for all
  $c\in\B{\K}$.  This equivalence is true since ($\Rightarrow$):
  \begin{align*}
    (\intent{c}\setminus (S\cup T))'&=
                                       ((\intent{c}\setminus S)\cap (\intent{c}\setminus T))'\\
                                     &=(\intent{c}\setminus
                                       S)'\cup(\intent{c}\setminus T)' = \extent{c}\cup\extent{c}=\extent{c}
  \end{align*}
  ($\Leftarrow$): From $(\intent{c}\setminus(S\cup T))'\supseteq
  (\intent{c}\setminus S)'$ and $(\intent{c}\setminus(S\cup T))'\supseteq
  (\intent{c}\setminus T)'$ we obtain with~\ref{lemdungl1} that
  $(\intent{c}\setminus S)'=\extent{c}$.  We now show~\ref{lemdungl2} by proving
  the inequality
  $\sum_{\B{\K}_{S}}|\extent{c}|+\sum_{\B{\K}_{T}}|\extent{c}|\leq
  \sum_{\B{\K}}|\extent{c}|+\sum_{\B{\K}_{S\cup T}}|\extent{c}|$. Using
  $\B{\K}_{S}\setminus\B{\K}_{S\cup T}\cup\B{\K}_{S\cup T}= \B{\K}_{S}$ where
  $\B{\K}_{S}\setminus\B{\K}_{S\cup T}\cap\B{\K}_{S\cup T}=\emptyset$ we find an
  equivalent equation employing ($\star$):
  \begin{align*}
    \sum_{\mathclap{\BB_{S}\setminus \BB_{S\cup T}}}|\extent{c}| +
    \sum_{\mathclap{\BB_{T}\setminus \BB_{S\cup T}}}|\extent{c}| +
    2\cdot \sum_{\mathclap{\BB_{S\cup T}}}|\extent{c}|
    \quad\leq\quad
    &\sum_{\mathclap{\BB_{S}\setminus \BB_{S\cup T}}}|\extent{c}| +
    \sum_{\mathclap{\BB_{T}\setminus \BB_{S\cup T}}}|\extent{c}| +\\
    &\sum_{\mathclap{\BB\setminus (\BB_{S}\cup\BB_{T})}}|\extent{c}| +
      2\cdot \sum_{\mathclap{\BB_{S\cup T}}}|\extent{c}|\\
    0\quad\leq\quad &\sum_{\mathclap{\BB\setminus (\BB_{S}\cup\BB_{T})}}|\extent{c}|
  \end{align*}
  where $\BB_{X}$ is short for $\B{\K}_{X}$.
\end{proof}

Equipped with the notion for relative relevance and some basic observations we
are ready to state the associated computational problem. We imagine that in
real-world applications attribute selection is a task to identify a set
$N\subseteq M$ of the most relevant attributes for a given cardinality
$n\in\mathbb{N}$, i.e., an element from $\{N\subseteq M\mid |N|=n\wedge r(N)\
\text{maximal}\}$. We call such a set $N$ a \emph{maximal relevant set}.

\begin{problem}[Relative Relevance Problem (RRP)]
  \label{rrp}
  Let $\mathbb{K}=(G,M,I)$ be a formal context and $n\in\mathbb{N}$ with
  $n<|M|$.  Find a subset $N\subseteq M$ with $|N|=n$ such that $r(N)\geq r(X)$
  for all $X\subseteq M$ where $|X|=n$. 
\end{problem}

Solving~\ref{rrp} is twofold infeasible. First, as $n$ increases does the number
of possible subset combinations. The determination of a maximal relevant set
requires the computation and comparison of $\binom{|M|}{|N|}$ different relative
relevances, which presents itself infeasible. Secondly, does the computation of
the relative relevance presume that the set of formal concepts is computed.
This states also an intractable problem for large formal contexts, which are the
focus for applications of the proposed relevance selection method.  To overcome
the first limitation we suggest an iterative approach. Instead of testings every
subset of size $n$ we construct $N\subseteq M$ by first considering all
singleton sets $\{m\}\subseteq M$. Consecutively, in every step $i$ where $X$ is
the so far constructed set we find $x\in M$ such that $r(X\cup\{x\})\geq
r(X\cup\{m\})$ for all $m\in M$.  This approach requires the computation of only
$\sum_{i=|M|-|n|+1}^{|M|} i$ different relative relevances and their
comparisons, which is simplified $n\cdot|M|-(n-1)\cdot n/2$. We call a set
obtained through this approach an \emph{iterative maximal relevant set} IMRS.
In fact the IMRS does not always correspond to the maximal relevant set. In
$(G,M,I)$ where $G=\{1,2,3,4\}$, $M=\{a,b,c,d\}$ and $I=\{(1,a),(1,c), (1,d),
(2,a), (2,b), (3,b), (3,c), (4,d)\}$ is $b$ the most relevant attribute, i.e.,
$r(b)>r(x)$ for all $x\in M\setminus\{b\}$. However, we find
$r(\{a,c\})>r(\{b,x\})$ for all $x\in M\setminus\{b\}$.  Hence, the relative
relevance of an IMRS indicates a lower bound for the relative relevance of the
maximal relevant set.

\subsection{Approximating RRP}
\label{sec:appr-relev}

Motivated by the computational infeasibility of~\ref{rrp} we investigate in this
section the possibility of approximating RRP, more specifically the
IMRS. Approaches for this approximation have to incorporate both aspects of the
relative relevance the structure of the concept lattice and the distribution of
the objects. Considering the former is not complicated due to~\cite[Proposition
30]{fca-book}, which states that for any $(G,M,I)$ is $\B{(G,N,I\cap(G\times
  N))}$ join preserving order embeddable into $\B{(G,M,I)}$ for any $N\subseteq
M$. Thus, this aspect can be represented through a quotient $|\B{\K}_{M\setminus
  N})|/|\B{\K}|$, which is a special case of the maximal common sub-graph
distance, see~\cite{BUNKE}.  Hence, whenever searching for the largest
$\B{(G,N,I\cap(G\times N))}$ the obvious choice is to optimize for large
contra-nominal scales in sub-contexts of $(G,M,I)$. For example, when selecting
three attributes in~\cref{runningexp} (left) the largest join preserving order
embeddable lattice would be generated by the set $\{b, c, d\}$. However, the
relative relevance of $\{b, c, g\}$ is significantly larger, in particular,
$r(\{b, c,d\})=17/33$ and $r(\{b, c, g\})=19/33$.

Considering the second requirement, the distribution of the objects on the
concept lattice, the sizes of the concept extents have to be incorporated. Since
they are unknown, unless we compute the concept lattice, we need a proxy for
estimating the influence of those. Accordingly, we want to reflect this with the
quotient $E(\K_{M\setminus N})/E(\K)$, which estimates the change of the object
distribution on the concept lattices when selecting a set $N\subseteq M$. This
quotient does employ a mapping $E:\mathcal{K}\to \mathbb{R}, \K\mapsto E(K)$,
which is to be found.  A natural candidate for this mapping would be information
entropy, as introduced by Shannon in~\cite{shannon}. He defined the entropy of a
discrete set of probabilities $p_1,\dotsc,p_n$ as $H=-\sum_{i\in I} p_i \log\
p_i$.  We adapt this formula to the realm of formal contexts as follows.

\begin{defn}
  \label{object-entropy}
  Let $\mathbb{K}=(G,M,I)$ be a formal context.
  Then the \textit{Shannon object information entropy of $\mathbb{K}$} is given as follows.
  \[E_{SE}(\mathbb{K})=  \sum_{g\in G} - \frac{|g''|}{|G|} ~log_2\left( \frac{|g''|}{|G|}\right)
  \]
\end{defn}

For this entropy function we employ the quotient $|g''|/|G|$, which does reflect
the extent sizes of the object concepts of $\K$. Obviously this choice does not
consider all concept extents. However, since every extent in a concept lattice
is either the extent of a object concept or the intersection of finitely many
extents of object concepts we see that Shannon object information entropy does
relate to all extents to some degree. We found another candidate for $E$ in the
literature~\cite{LOIA20181}. The authors there introduced an entropy function
which is roughly speaking the mean distance of the extents of object concepts to
the complete set objects.

\begin{defn} 
  \label{context-entropy}
  Let $\mathbb{K}=(G,M,I)$ be a formal context.
  Then the \textit{object information entropy of} $\mathbb{K}$ is given as follows.
  \[E_{OE}(\mathbb{K})= \frac{1}{|G|} \sum_{g\in G} \left( 1 - \frac{|g''|}{|G|}\right)
  \]
\end{defn}

We directly observe that this entropy decreases as the number of objects having
similar attribute sets increases. Furthermore, we recognize an essential
difference for $E_{OE}$ compared to $E_{SE}$. The Shannon object information
entropy reflects on the number of necessary bits to encode the formal
context. In contrary does the object information entropy reflect on the average
number of bits to encode an object from the formal context. To enhance the first
grasp of the just introduced functions as well as the relative relevance defined
in~\cref{def:relr} we want to investigate them on well known contextual
scales. In particular, the \emph{ordinal scale}
$\mathbb{O}_n\coloneqq([n],[n],\leq)$, the \emph{nominal scale}
$\mathbb{N}_n\coloneqq([n],[n],= )$, and the \emph{contranominal scale}
$\mathbb{C}_n\coloneqq([n],[n],\neq)$, where $[n]\coloneqq
\{1,\dotsc,n\}$. Since there is a bijection between the set $\{1,\dotsc,n\}$ to
the extent sizes $|g''|$ in an ordinal scale we obtain that
$E_{SE}(\mathbb{O}_n)=-\sum_{i=1}^{n} \frac{i}{n} log_2
\left(\frac{i}{n}\right)$ and
$E_{OE}(\mathbb{O}_n)=\frac{1}{n}\sum_{i=1}^n\left(1-\frac{i}{n}\right)=
\frac{1}{n}\frac{n(n+1)}{2n}=\frac{n+1}{2n}$. The former diverges to $\infty$
whereas the latter converges to $1/2$.  Based on the linear structure of
$\underline{\mathfrak{B}}(\mathbb{O}_n)$ we conclude that the set
$\mathfrak{B}(\K)\setminus \mathfrak{B}(\K)_{\{m\}}=\{(m',m'')\}$ for all $m\in
M$. So the relative relevance of the attribute $m\in M$ amounts to
$r(m)=1-(\sum_{i=1}^{n}i-|m''|)\ /\ \sum_{i=1}^{n}i=2|m''|/(n\cdot (n+1))$.
	
Both the nominal scale as well as the contranominal scale satisfy $g''=g$ for
all $g\in G$ for different reasons. We conclude that $E_{SE}$ and $E_{OE}$
evaluate respectively equally for $\mathbb{N}_{n}$ and $\mathbb{C}_{n}$. In
detail, $E_{SE}(\mathbb{N}_n)=E_{SE}(\mathbb{C}_n)=-\sum_{g\in G} \frac{1}{n}
log_2 \left(\frac{1}{n}\right)=\log_{2}(n)$ and
$E_{OE}(\mathbb{N})=E_{OE}(\mathbb{C})=\frac{1}{n}\sum_{g\in G} \left(
  1-\frac{1}{n}\right)=\frac{n-1}{n}$. For the relative relevance we observe
that $r(m)=r(n)$ for all $m,n\in M$ in the case of the
nominal/contranominal scale. This is due to the fact that every attribute is
part of the same number of concepts. For the nominal scale holds
$r(m)=1-\frac{2n-1}{2n}$ for all $m\in M$. Hence, as the number of attributes
increases does the relevance of a single attribute converge to zero.  The
relative relevance of an objects in the case of the contranominal scale is
$r(m)=1-\frac{\sum_{k=0}^{n} \binom{n}{k}(n-k)-\sum_{k=0}^{n-1}
  \binom{n-1}{k}(n-1-k)}{\sum_{k=0}^{n} \binom{n}{k}(n-k)}$ for all $m\in M$.

\begin{example} \label{exmpl:entropy} Revisiting our running
  example~\cref{runningexp} (right). This context has four objects with
  $\{B\}''=\{B,F,S\},\ \{F\}''=\{F\},\ \{D\}''=\{F,D\}\ \text{and}\
  \{S\}''=\{S\}$. Its entropies are given by $E_{OE}(\mathbb{K})= \frac{1}{4}
  \sum_{g\in G} \left( 1 - \frac{|g''|}{4}\right)\approx 0.56$ and
  $E_{SE}(\K)\approx 0.45$.
\end{example}

Considering both aspects discussed in this section we now want to introduce a
function which shall be capable of approximating RRP. 

\begin{defn} 
	\label{abstand}
	Let $\mathbb{K}=(G,M,I)$ and $\mathbb{K}_{\overline{N}}\coloneqq(G,N,I\cap
        (G\times N))$ be formal contexts with $N\subseteq M$. The
        \textit{entropic relevance approximation (ERA) of $N$} is defined as
	\[
          ERA(N)\coloneqq
          \frac{|\mathfrak{B}(\K_{\overline{N}})|}{|\mathfrak{B}(\K)|}
          \cdot \frac{E(\K_{\overline{N}})}{E(\K)}
	.\]
\end{defn}

First, the ERA compares the number of concepts in a given formal context to the
number of concepts in a sub-context on $N\subseteq M$. This reflects the
structural impact when restricting the attribute set. Secondly, an quotient is
evaluated where the entropy of $\K_{\overline{N}}$ is compared to the entropy of $\K$. When
using~\cref{abstand} for finding a subset $N\subseteq M$ with maximal (entropic)
relevance it suffices to compute $N$ such that $\B{\K_{\overline{N}}}\cdot E(\K_{\overline{N}})$ is
minimal. This task is essentially less complicated since we only have to compute
$\B{\K_{\overline{N}}}$ and $E(\K_{\overline{N}})$ for some comparable small formal context $\K_{\overline{N}}$.

\section{Experiments}
\label{sec:experiments}
\begin{figure}[t]
  \centering
  \includegraphics[width=0.48\textwidth]{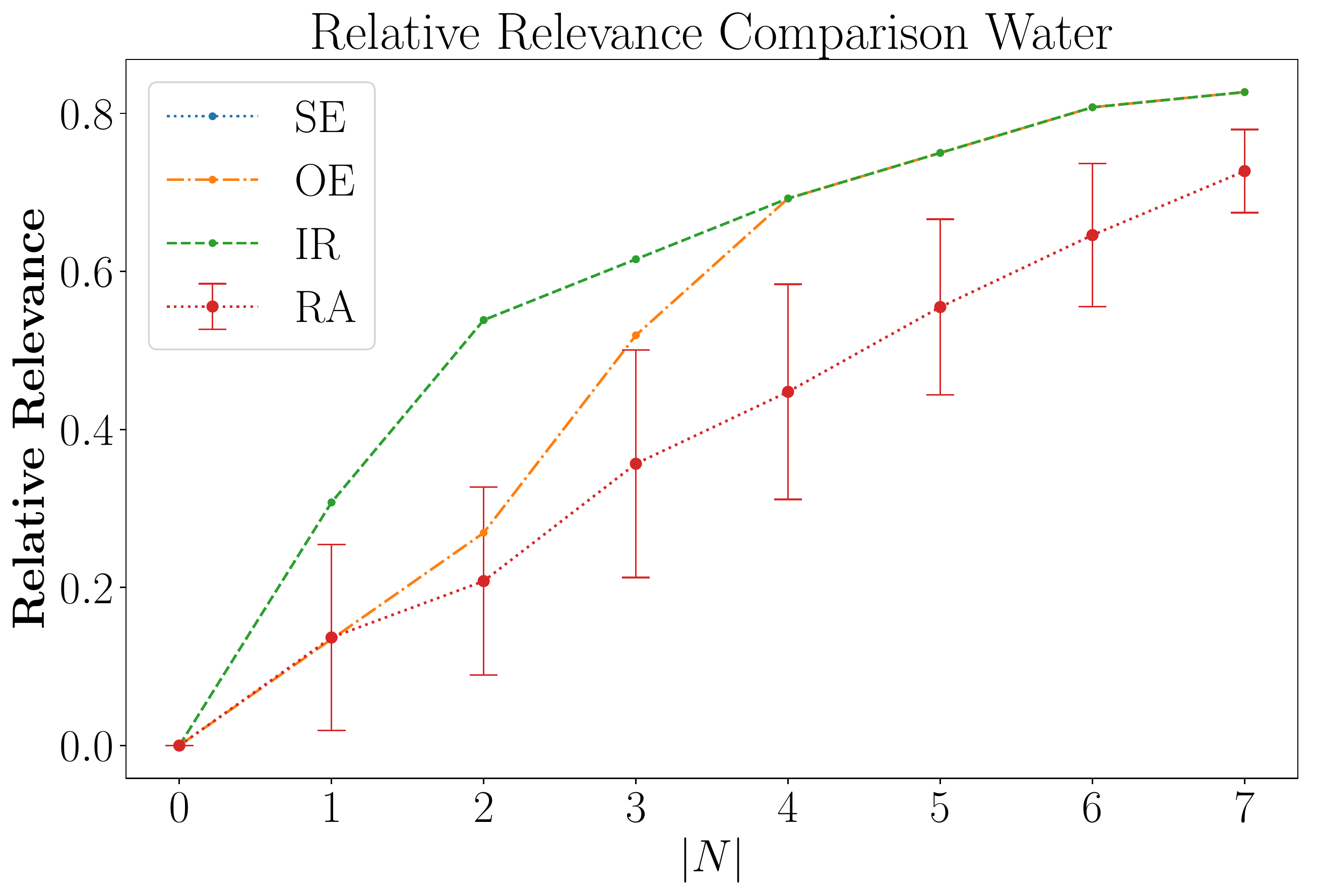}
  \includegraphics[width=0.48\textwidth]{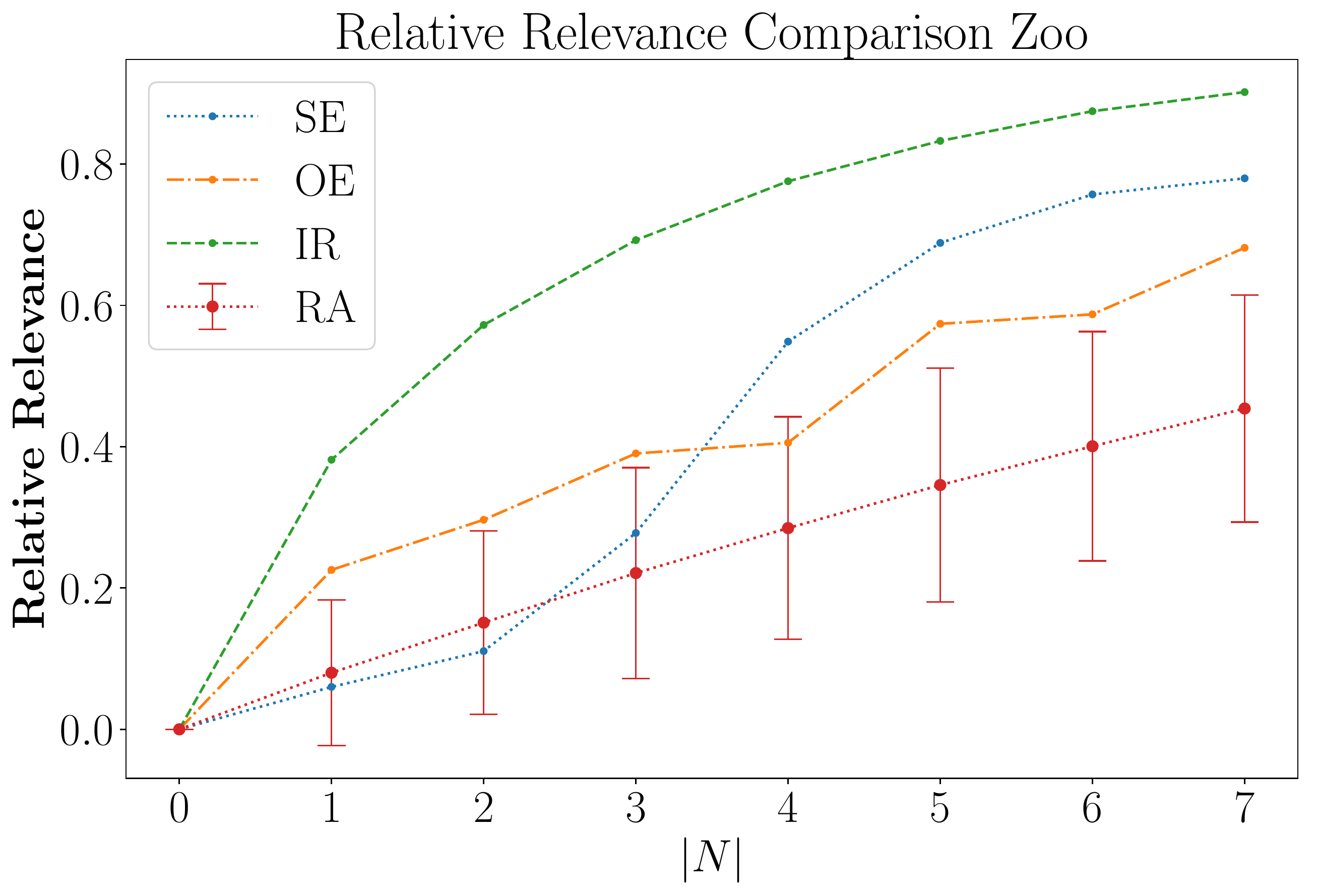}
  \caption{Relevance of attribute selections through entropy (SE,OE), IMRS (IR),
    and random selection (RA) for the ``Living beings in water'' (left)
    and the zoo context (right).}
  \label{fig:waterzoo}
\end{figure}

To assess the ability for approximating relative relevance
through~\cref{abstand} we carried out several experiments in the following
fashion. For all data set we computed the iterative maximal relevant subsets of
$M$ of sizes one to seven (or ten) in the obvious manner. We decided for those
fixed numbers for two reasons. First, using a relative number, e.g., 10\% of all
attributes, would still lead to an infeasible computation when the initial
formal context is very large. Secondly, formal contexts with up to ten
attributes permit a plenitude of research methods that are impracticable for
larger contexts, in particular, human evaluation.

Then we computed subsets of $M$ using ERA, for which we used both introduced
entropy functions, and their relative relevance. Finally, we sampled subsets of
$M$ randomly at least $|M|\cdot 10$ many times and computed their average
relative relevance as well as the standard deviation in relative relevance.

\subsection{Data Set Description}
\label{sec:data-set-description}
A total of 2678 formal contexts were considered in this experimental study. From
those were 2674 contexts excerpts from the BibSonomy
platform\footnote{\url{https://www.kde.cs.uni-kassel.de/wp-content/uploads/bibsonomy/}}
as described in~\cite{benz2010social}. All those contexts are equipped with an
attribute set of twelve elements and a varying number of objects. The particular
extraction method is described in detail in~\cite{BorchmannH16}. For the rest we
revisited three data sets well known in the realm of formal concept analysis,
i.e., \emph{mushroom, zoo, water}~\cite{Dua:2017, fca-book}, and additionally a
data set \emph{wiki44k} introduced in~\cite{rules-kg-learning}, which is based
on a 2014 Wikidata\footnote{\url{https://www.wikidata.org}} database dump. The
well-known \emph{mushroom} data set is a collection of 8124 mushrooms described
by 119 (scaled) attributes and exhibits 238710 formal concepts. The \emph{zoo}
data set possesses 101 animal descriptions using 43 (scaled) attributes and
exhibits 4579 formal concepts. The \emph{water} data set, more formally ``Living
beings and water'', has eight objects and nine attributes and exhibits 19 formal
concepts. Finally, the \emph{wiki44k} has 45021 objects and 101 attribute
exhibiting 21923 formal concepts.

\begin{figure}[t]
  \centering
  \includegraphics[width=0.48\textwidth]{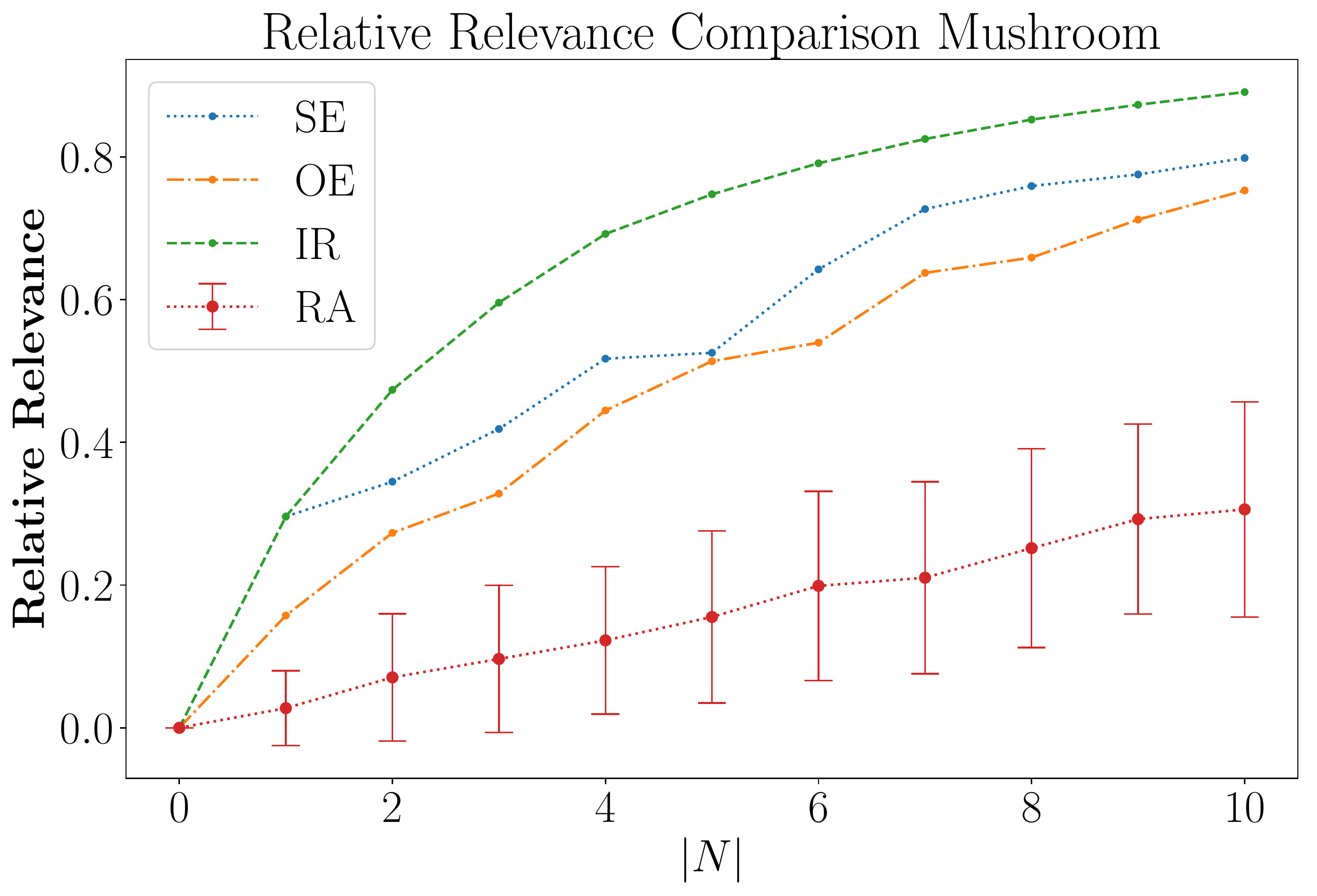}
  \includegraphics[width=0.48\textwidth]{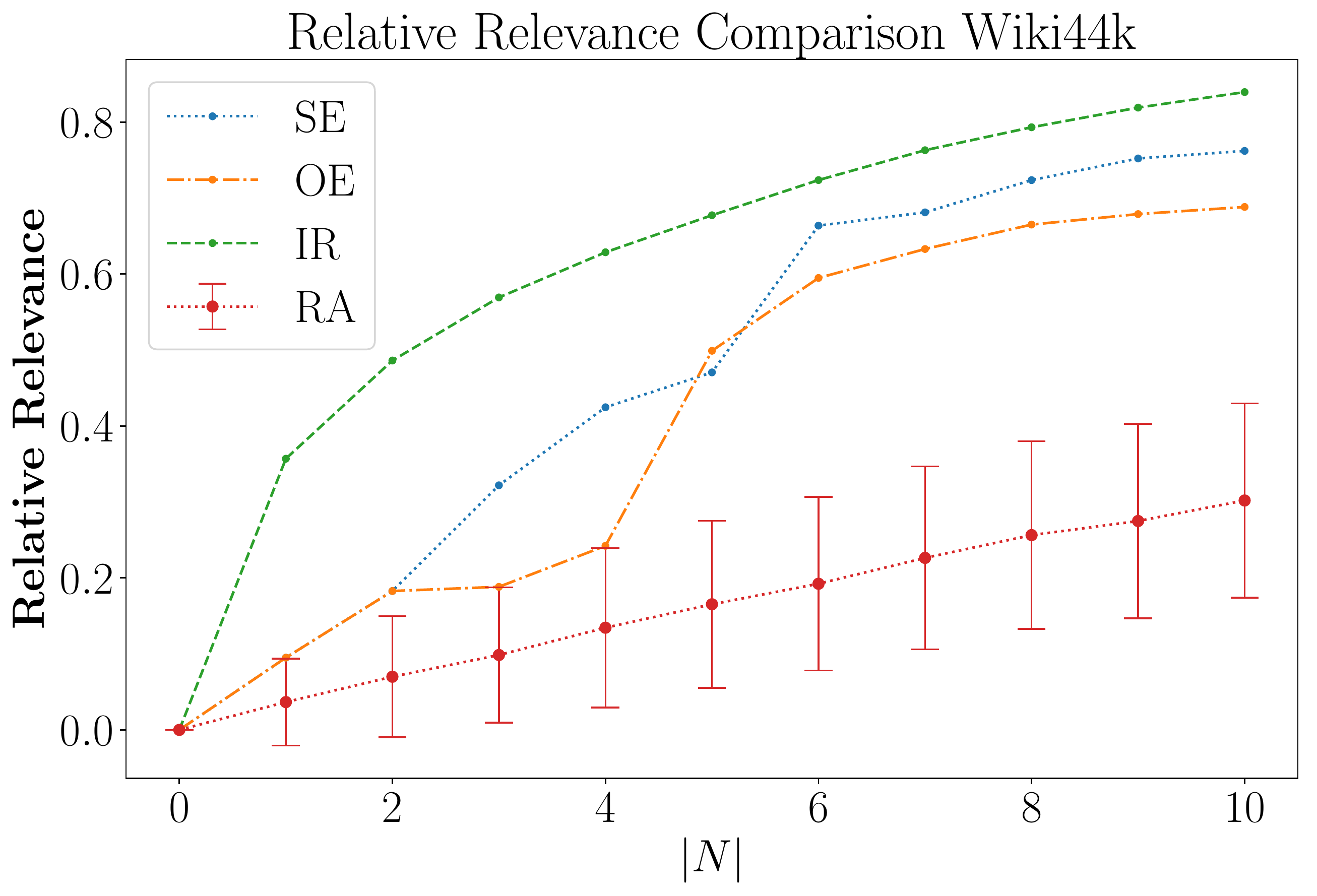}
  \caption{Relevance of attribute selections through entropy (SE, OE), IMRS
    (IR), and random selection (RA) for the mushroom (left) and the wiki44k context
    (right).}
  \label{fig:mushroomwiki}
\end{figure}

\begin{figure}[t]
  \centering
  \includegraphics[width=0.768\textwidth]{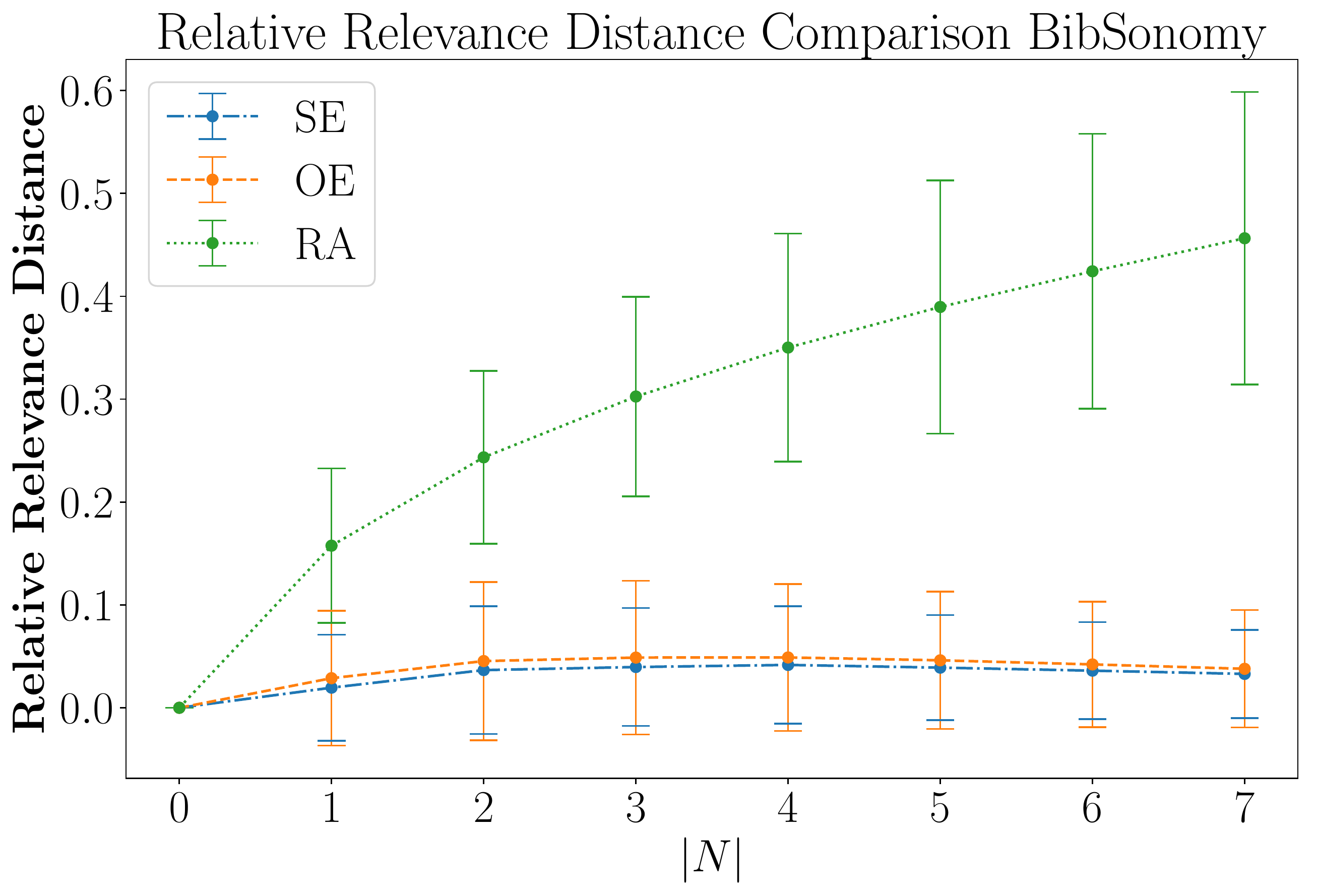}
  \caption{Average distance and standard deviation to IMRS for entropy and
    random based selections of $|N|$ attributes for 2674 formal contexts from
    BibSonomy.}
  \label{fig:bibsonomy}
\end{figure}

\subsection{Results}
\label{sec:results}

In~\cref{fig:mushroomwiki,fig:bibsonomy,fig:waterzoo} we depicted the results of
our computations. We observe in all experiments that the relative relevance of
the subsets found through the iterative approach are an upper bound for the
relative relevance of all subsets computed through entropic relevance
approximation or random selection, with respect to the same size of subset. In
particular we find IMRS of cardinality seven and above have a relative relevance
of at least 0.8. Moreover, the relative relevance of the attribute subsets
selected by both ERA versions (SE or OE) exceed the relative relevance of the
randomly selected subsets except for the Shannon object information entropy for
|N|=1 and |N|=2 in the zoo context. Principally we find for contexts containing
a small number of attributes (\cref{fig:waterzoo}) a large increase of the
distance between the relative relevance of the randomly selected attributes and
the attribute sets selected through the entropy approach. This characteristic
manifests in the relative relevance of both ERA selections excelling not only
the mean relative relevance of randomly chosen attribute sets but also the
standard deviation for subset sizes of $|N|=4$ and above. In the case of
contexts containing a huge number of attributes this observation can be made for
selections with $|N|=1$, already. Furthermore, the interval between the relative
relevance of the attribute subsets selected by both ERA versions and the
relative relevance of the randomly selected subsets is significantly larger than
in the case of contexts with small attribute set sizes. In general we may point
out that neither of the entropies seems preferable over the other in terms of
performance. In~\cref{fig:bibsonomy} we show the results for the experiment with
the 2674 formal contexts from BibSonomy. We plotted for all three methods,
ERA-OE/SE and random, the mean distance in relative relevance to the IMRS of the
same size together with the standard deviation. We detect a significant
difference for randomly chosen and ERA chosen sets with respect to their
relative relevance. The deviation for both ERA is bound by 0 and 0.12 . In
contrast, the relative relevance for randomly selected sets is bound by 0.09 and
0.6.

\subsection{Discussion}
\label{sec:discussion}
We found in our investigation that attribute sets obtained through the iterative
approach for relative relevance do have a high relevance value. Even though
their relative relevance is only an lower bound compared to the maximal relevant
set they do exhibit a relative relevance of 0.8 for attribute set sizes seven
and above. We conclude from this that iterative approach is a sufficient solution to
the relative relevance problem. Based on this we may deduct that entropic
relative approximation is also a good approximation for a solution to the RRP.
In particular, in large formal contexts investigated in this work the
approximation was even better than in the smaller ones.

\section{Conclusion}
\label{sec:conclusion}

By defining the relative relevance of attribute sets in formal contexts we
introduced a novel notion for attribute selection. This notion respects both the
structure of the concept lattice and the distribution of the objects on it. To
overcome computational limitations, which arised from the notion of relative
relevance, we introduced an approximation based on two different entropy
functions adapted to formal contexts. For this we used a combination of two
factors. The change in the number of concepts and the change in entropy that
arise by the selection of an attribute subset. The experimental evaluation for
relative relevance as well as the entropic approximation seem to comply with the
theoretical modeling. 

We may conclude our work with two open questions. First, even though IMRS seems
a good choice for relevant attributes we suspect that computing the maximal
relevant set, with respect to RRP, can be achieved more feasible as presented in
this work. Secondly, so far our justification for RRP is based on theoretical
assumptions and a basic experimental study. We imagine, and are curious, if
maximal relevant attribute sets are also employable in supervised machine
learning setups. For example, one may perceive the task of adding a new object
to a given formal context as instance of such a setup. The question is, how
capable is the context to add this object to an already existing concept.

\sloppy
\printbibliography

\end{document}